\documentclass{ifacconf}

\usepackage{graphicx}      
\usepackage{natbib}        

\usepackage{amsmath} 
\usepackage{amssymb}  

\usepackage{bm}

\usepackage{algorithm}
\usepackage{algpseudocode}
\usepackage{booktabs}
\usepackage{url}

\usepackage{multirow}  

\newtheorem{definition}{Definition}
\newtheorem{remark}{Remark}

\usepackage{eso-pic}

\newcommand{\cref}[1]{(\ref{#1})}

\usepackage{subcaption}

\begin{document}

\AddToShipoutPictureFG*{%
  \AtPageUpperLeft{%
    \put(50,-100){%
      \fbox{%
        \parbox{0.9\textwidth}{%
          \small
          © 2026 the authors. This work has been accepted to IFAC for publication under a Creative Commons Licence CC-BY-NC-ND
        }%
      }%
    }%
  }%
}

\begin{frontmatter}

\title{Dissipative Latent Residual Physics-Informed Neural Networks for Modeling and Identification of Electromechanical Systems\thanksref{footnoteinfo}} 

\thanks[footnoteinfo]{This work was supported in part by the European Union Horizon Project TORNADO (GA 101189557).}

\author[First]{Youyuan Long} 
\author[First]{Gokhan Solak} 
\author[First]{Arash Ajoudani}

\address[First]{Human-Robot Interfaces and Interaction Lab, Istituto Italiano di Tecnologia, 16163 Genoa, Italy (e-mail: Youyuan.Long@iit.it; Gokhan.Solak@iit.it; Arash.Ajoudani@iit.it)}

\begin{abstract}               
Accurate dynamical modeling is essential for simulation and control of embodied systems, yet first-principle models of electromechanical systems often fail to capture complex dissipative effects such as joint friction, stray losses, and structural damping. While residual-learning physics-informed neural networks (PINNs) can effectively augment imperfect first-principle models with data-driven components, the residual terms are typically implemented as unconstrained multilayer perceptrons (MLPs), which may inadvertently inject artificial energy into the system.
To more faithfully model the dissipative dynamics, we propose DiLaR-PINN, a dissipative latent residual PINN designed to learn unmodeled dissipative effects in a physically consistent manner. Structurally, the residual network operates only on unmeasurable~(latent) state components and is parameterized in a skew–dissipative form that guarantees non-increasing energy for any choice of network parameters. To enable stable and data-efficient training under partial measurability of the state, we further develop a recurrent rollout scheme with a curriculum-based sequence length extension strategy.
We validate DiLaR-PINN on a real-world helicopter system and compare it against four baselines: a pure physical model (without a residual network), an unstructured residual MLP, a DiLaR variant with a soft dissipativity constraint, and a black-box LSTM. The results demonstrate that DiLaR-PINN more accurately captures dissipative effects and achieves superior long-horizon extrapolation performance.
\end{abstract}

\begin{keyword}
Nonlinear system identification, Electromechanical system, 
Dissipativity guarantee, Physics informed neural networks, Semi-parametric identification
\end{keyword}

\end{frontmatter}

\section{Introduction}
Learning accurate physical models is crucial for controlling embodied systems, especially when model-based control and planning strategies are employed \citep{lutter2019deep}. 
Classical system identification methods typically formulate physical dynamics through parametric models (e.g., state-space representations, ARX/NARX models), then estimate parameters via techniques like least square, prediction error methods (PEM)~\citep{forssell1999closed} or subspace identification \citep{van2012subspace}. While this yields models that are interpretable and often require less data, it also leads to high bias: any structural mismatch (e.g., unmodeled friction, hysteresis, or actuator nonlinearities) cannot be compensated by parameter tuning alone, resulting in systematic prediction errors.

At the other end of the spectrum, neural-network-based black-box identification methods have become an important tool for nonlinear system identification, thanks to their strong nonlinear approximation and generalization capabilities \citep{chen1990non, pillonetto2025deep}. However, this purely data-driven approach heavily relies on the quantity and quality of the available training data \citep{huang2022applications}. In many practical scenarios, where data are scarce or expensive to obtain, most state-of-the-art machine learning techniques often lack robustness and provide no guarantees of convergence in such low-data regimes \citep{raissi2019physics, huang2022applications}.

To alleviate this trade-off between model bias and data efficiency, physics-informed neural networks (PINNs) have emerged as a promising middle ground. PINNs incorporate prior physical knowledge directly into neural network models, mitigating the adverse effects of limited training data, improving generalization, and promoting physically consistent predictions \citep{raissi2019physics}. Subsequent research has demonstrated the broad applicability of PINNs across a wide range of scientific and engineering disciplines, including fluid dynamics, quantum mechanics, and robotics \citep{farea2024understanding}. 

\begin{figure*}[!t]
    \centering
    \includegraphics[width=1.00\linewidth]{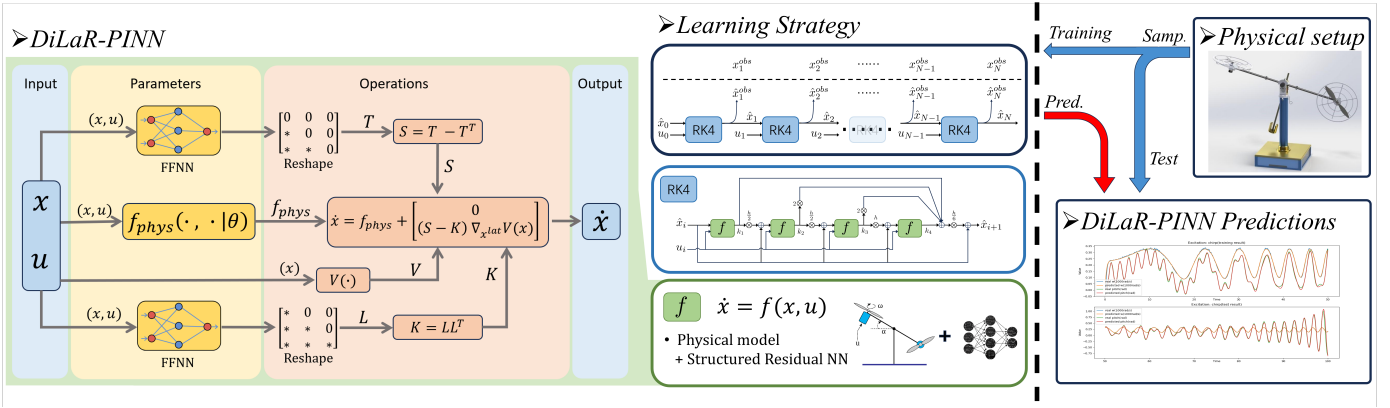}
    \vspace{-0.4cm}
    \caption{\textbf{Overview of the DiLaR-PINN architecture and its learning strategy.} 
    The DiLaR-PINN block presents the model structure, which combines a nominal physical model with a structured dissipative residual network as described in Sec.~\ref{sec:dilar-method}. 
    The Learning Strategy block illustrates how the unknown parameters of DiLaR-PINN are learned from sampled data using recurrent RK4 rollouts, as detailed in Sec.~\ref{sec:rkr-method}.}
    \label{framework}
\end{figure*}

An existing PINN strategy assumes the availability of an approximate but imperfect physical model and then uses a neural network to learn the residual dynamics that the model fails to capture. In this setting, the known physics provides a structured baseline, while the neural component accounts for unmodeled effects such as complex friction, hysteresis, actuator nonlinearities, or unknown external disturbances. 
\cite{rackauckas2020universal} incorporate this residual-learning paradigm into the universal differential equations (UDE) framework, in which neural residuals augment the mechanistic model and are trained jointly from data.
Focusing on mechanical systems, \cite{roehrl2020modeling} propose a physics-informed neural ODE approach based on Lagrangian mechanics. In their method, the nominal equations of motion are derived from kinetic and potential energy, and additional neural networks are used to represent unknown non-conservative forces. 
Although these works show that residual neural networks can effectively augment imperfect physics models and yield data-efficient, accurate, and physically consistent dynamical models, the residual components are implemented as plain multilayer perceptrons without any structural inductive bias, and the identification procedure assumes full-state measurability. 
Building on this line of work, we propose a dissipative latent residual PINN (DiLaR-PINN), as shown in Fig.~\ref{framework}. Through its structural design, the residual network is constrained such that every function in its hypothesis space represents only dissipative dynamics. In addition, we introduce a corresponding learning strategy that enables stable and efficient training even when only a subset of the system states is measurable.

This work focuses on electromechanical systems, where coarse physical models can often be derived from first principles, but the components that are difficult to capture accurately are predominantly dissipative in nature. 
For instance, in robotic manipulators, the Lagrangian formulation yields a standard model structure, yet joint friction—such as Coulomb friction, viscous friction, the Stribeck effect, and stick--slip behavior—remains highly challenging to model analytically \citep{wang2001adaptive,trinh2025accuracy}. 
Similarly, in transformer systems, although the governing equations based on Maxwell’s laws and magnetic circuit theory are well established, eddy-current losses and stray losses are notoriously difficult to characterize from first principles \citep{jahi2021design}. 
In continuum mechanical system such as ropes or continuum robots, even with continuum mechanics and Cosserat rod theory, the most troublesome components arise from internal dissipation, including material damping and inter-fiber friction \citep{lian2019investigation}.
Motivated by these observations, we introduce the DiLaR-PINN, a dissipative latent residual PINN specifically designed to capture these hard-to-model dissipative dynamics. Through its structural design, DiLaR-PINN is guaranteed never to inject energy into the system for any parameter setting, ensuring physically consistent residual modeling.
Furthermore, in this work we restrict the residual network to operate only on the unmeasurable (latent) components of the system state. This design choice is motivated by the fact that measurable states are typically low-order quantities such as positions, whereas the unmeasurable components correspond to higher-order quantities such as velocities. Dissipative dynamics predominantly act on these higher-order states and influence the measurable states only indirectly through integration. By letting the residual network act solely on the latent states, the model can more effectively capture these hard-to-model dissipative effects, as demonstrated in Sec.~\ref{case study}. Nevertheless, this design is not mandatory and can be modified depending on the application.

Overall, the contributions of this work can be summarized as follows:

\begin{enumerate}
    \item We propose \textit{DiLaR-PINN}, a physics-informed residual architecture designed specifically to capture unmodeled dissipative dynamics. Through a structural hard constraint, the residual network is guaranteed never to inject energy into the system.

    \item We introduce a robust learning strategy that enables DiLaR-PINN to be trained efficiently while handling scenarios in which only a subset of the system states is measurable.

    \item We conduct real-world experiments and compare against four baselines—a pure physics model, an unstructured residual MLP, a DiLaR variant with a soft dissipativity constraint, and a black-box LSTM—to demonstrate the effectiveness and superior performance of DiLaR-PINN.

\end{enumerate}

\section{Methodology}

In this section, we begin by formulating the system identification problem under partial measurability (Sec.~\ref{sec:problem}). 
We then present the design of the dissipative latent residual network, which constitutes the core contribution of this work (Sec.~\ref{sec:dilar-method}). 
Finally, we introduce an efficient learning strategy that enables stable and data-efficient estimation of both the physical and residual model parameters (Sec.~\ref{sec:rkr-method}).

\label{section2}
\subsection{Problem Formulation}\label{sec:problem}
Given a state-space equation with unknown parameters,
\begin{equation}
    \label{sse}
    \dot{x} = f(x, u \mid \theta, \phi), \quad \text{with} \quad 
    x = \begin{bmatrix} x^{\text{obs}} \\ x^{\text{lat}} \end{bmatrix},
\end{equation}
where \(x\) represents the system state, consisting of a measurable (observed) part \(x^{\text{obs}}\) 
and an unmeasurable (latent) part \(x^{\text{lat}}\).
The variable \(u\) denotes the system input, while \(\theta\) and \(\phi\) denote the physical 
and neural network (NN) parameters to be identified, respectively.
Our objective is to estimate the unknown parameters \(\theta\) and \(\phi\) using the sampled 
dataset \(\{(x^{\text{obs}}_i, u_i)\}_{i=0}^{N}\), where the sampling interval \(h\) is known.
The parameter \(\theta\) and \(\phi\) can be estimated by minimizing the trajectory prediction error described by the state-space equation~\eqref{sse}.
Accordingly, the optimization problem is formulated as:
\begin{equation}
        \label{opt}
        \begin{aligned} 
        \min_{\theta,\phi,\hat{x}_i} \quad & \sum\limits_{i=1}^{N} \ell(\hat{x}_i^{obs}, x_i^{obs})  \\
            \text{s.t.} \quad \ & \hat{x}_{i+1} = \hat{x}_i + \int_{ih}^{(i+1)h}f(\hat{x}_i, u_i \mid \theta, \phi) \, dt
\quad \forall i \leq N  \\
            & \begin{bmatrix} \hat{x}_i^{obs} \\ \hat{x}_i^{lat} \end{bmatrix}=\hat{x}_i \quad \forall i \leq N, \quad \ \hat{x}_0^{obs}=x_0^{obs},
        \end{aligned}
    \end{equation}
where the $\hat{\cdot}$ notation, refers to a prediction of a variable. The function $\ell$ can be any loss function that describes the difference between the observed and predicted states. 
Directly solving~\eqref{opt} is challenging, since $f(\cdot)$ may be nonlinear and the integral generally has no closed-form solution. 
We next introduce the design of $f$ and the learning strategy used to estimated parameter \(\theta\) and \(\phi\) through~\eqref{opt}.

\subsection{Dissipative Latent Residual Network}\label{sec:dilar-method}
We decompose the system dynamics \( f(\cdot) \) as a nominal physical part augmented with a \emph{latent-only} residual
\begin{equation}
\label{eq:split}
f(x,u\mid\theta, \phi) \;=\;
\underbrace{f_{\mathrm{phys}}(x,u\mid\theta)}_{\text{nominal physical model}}
\;+\;
\underbrace{\begin{bmatrix} 0 \\ r_\phi(x,u) \end{bmatrix}}_{\text{latent residual}} ,
\end{equation}
where \( f_{\mathrm{phys}}(x, u \mid \theta) \) denotes the known structural component of the dynamics, 
which is physics-informed and typically derived from first-principle modeling, with unknown physical parameters \(\theta\), while \( r_\phi(x, u) \) serves as a compensatory term capturing the unmodeled physical dynamics. Furthermore, we aim for \( r_\phi(x, u) \) to learn a dissipative representation, meaning that it should not actively inject energy into the system.

To formalize this dissipativity requirement, consider an ``energy'' function \(V:\mathbb{R}^{n_x}\!\to\mathbb{R}_{\ge 0}\) that is continuously differentiable on the \emph{full} state.
Its time derivative satisfies
\begin{equation*}
\dot V(x)\;=\;\nabla_x V(x)^\top f_{\mathrm{phys}}(x,u\mid \theta)\;+\;
\underbrace{\nabla_x V(x)^\top \begin{bmatrix} 0 \\ r_\phi(x,u) \end{bmatrix}}_{\le 0},
\end{equation*}
which leads directly to the following dissipativity condition:
\begin{equation}
\label{dissipation-req}
\nabla_{x^{\mathrm{lat}}} V(x)^\top r_\phi(x,u) \;\le 0 ,
\quad \forall\, (x,u).
\end{equation}

\begin{prop}[Dissipation Guarantee]\label{prop 1}
Suppose the latent residual network is parameterized as
\begin{equation}
\label{param}
r_\phi(x,u) \;=\; \big(S_\phi(x,u) - K_\phi(x,u)\big)\,\nabla_{x^{\mathrm{lat}}} V(x),
\end{equation}
where $S_\phi(x,u)$ is skew-symmetric ($S_\phi^\top = -S_\phi$) and 
$K_\phi(x,u)$ is positive semidefinite ($K_\phi \succeq 0$).
Then, $r_{\phi}(x,u)$ satisfies~\cref{dissipation-req}.
\end{prop}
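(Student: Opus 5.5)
The plan is a direct substitution followed by two elementary quadratic-form facts. Fix an arbitrary pair $(x,u)$ and abbreviate $g := \nabla_{x^{\mathrm{lat}}} V(x)$, $S := S_\phi(x,u)$ and $K := K_\phi(x,u)$. Inserting the parameterization~\eqref{param} into the left-hand side of~\cref{dissipation-req} gives
\begin{equation*}
\nabla_{x^{\mathrm{lat}}} V(x)^\top r_\phi(x,u) \;=\; g^\top (S - K)\, g \;=\; g^\top S g \;-\; g^\top K g ,
\end{equation*}
so the claim splits into two independent pointwise statements about the two quadratic forms.

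For the skew-symmetric term, the idea is that $g^\top S g$ is a real scalar and therefore equals its own transpose. Using $S^\top = -S$ this yields $g^\top S g = (g^\top S g)^\top = g^\top S^\top g = -g^\top S g$, hence $g^\top S g = 0$. This is why $S_\phi$ can only redistribute energy among the latent coordinates and never changes it.

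For the positive semidefinite term, $K \succeq 0$ means by definition that $v^\top K v \ge 0$ for every vector $v$ of the appropriate dimension. Taking $v = g$ gives $-g^\top K g \le 0$. Combining the two steps, $\nabla_{x^{\mathrm{lat}}} V(x)^\top r_\phi(x,u) = -g^\top K g \le 0$. Since $(x,u)$ was arbitrary and neither step depends on the network weights, \cref{dissipation-req} holds for all $(x,u)$ and every choice of $\phi$.

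There is no real analytical obstacle. The only point to check is that the dimensions match: $S_\phi$ and $K_\phi$ must be square matrices of size $n_{\mathrm{lat}}\times n_{\mathrm{lat}}$, with $n_{\mathrm{lat}}$ the dimension of $x^{\mathrm{lat}}$, so that $r_\phi$ has the dimension of $x^{\mathrm{lat}}$. One should also note that skew-symmetry and semidefiniteness are required pointwise for every $(x,u)$. If the network is built, for example, as $S_\phi = A_\phi - A_\phi^\top$ and $K_\phi = L_\phi L_\phi^\top$, both properties hold by construction.
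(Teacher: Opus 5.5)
Your proof is correct and takes the same approach as the paper: substitute the parameterization, split the expression into $g^\top S g - g^\top K g$, show the skew-symmetric term vanishes, and bound the positive semidefinite term. You also supply the scalar-transpose argument showing $g^\top S g = 0$, which the paper only asserts.
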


\begin{pf}
By definition,
\begin{align*}
\nabla_{x^{\mathrm{lat}}} V(x)^\top r_\phi
&= \nabla_{x^{\mathrm{lat}}} V(x)^\top (S_\phi - K_\phi)\,\nabla_{x^{\mathrm{lat}}} V(x) \\[1pt]
& \hspace{-55pt} = \nabla_{x^{\mathrm{lat}}} V(x)^\top S_\phi \,\nabla_{x^{\mathrm{lat}}} V(x)
 - \nabla_{x^{\mathrm{lat}}} V(x)^\top K_\phi \,\nabla_{x^{\mathrm{lat}}} V(x).
\end{align*}
Since $S_\phi^\top = -S_\phi$, we have $\nabla_{x^{\mathrm{lat}}} V(x)^\top S_\phi \nabla_{x^{\mathrm{lat}}} V(x) = 0$ for any vector $\nabla_{x^{\mathrm{lat}}} V(x)$.
Moreover, $K_\phi \succeq 0$ implies $\nabla_{x^{\mathrm{lat}}} V(x)^\top K_\phi \nabla_{x^{\mathrm{lat}}} V(x) \ge 0$. \\
Therefore, $\nabla_{x^{\mathrm{lat}}} V(x)^\top r_\phi(x,u) \le 0$, as claimed.
\end{pf}


Proposition~\ref{prop 1} provides the theoretical foundation for the structural design of $r_\phi$, where $S_\phi$ describes the \emph{conservative energy flow} and $K_\phi$ describes the \emph{energy loss}.
For the positive semidefinite (PSD) matrix $K_\phi$, one can employ a
Cholesky-like parameterization ($K_\phi = L_\phi L_\phi^\top$, where $L_\phi$
is lower triangular). An ensuing question, however, is whether this parameterization
in~\cref{param} covers all $r_\phi$ satisfying~\cref{dissipation-req} for any
given $(x,u)$, as we do not wish to lose expressivity. The following proposition
shows that it does.

\begin{definition}
Define the \emph{dissipative cone}
\[
\mathcal{C}(x)\;:=\;\{\, r\in\mathbb{R}^{n_{{\mathrm{lat}}}}\;:\; \nabla_{x^{\mathrm{lat}}}V(x)^\top r \le 0 \,\}.
\]
\end{definition}

\begin{prop}[Coverage of the Dissipative Cone]
\label{prop 2}
Let \\ $g(x):=\nabla_{x^{\mathrm{lat}}}V(x)\in\mathbb{R}^{n_{{\mathrm{lat}}}}$ and define the \emph{skew--dissipative representable set}
\[
\mathcal{H}(x):=\big\{\, (S-K)\,g(x)\;:\; S^\top=-S,\ \ K\succeq 0 \,\big\}.
\]
If $g(x)\neq 0$, then $\mathcal{H}(x)=\mathcal{C}(x)$.
\end{prop}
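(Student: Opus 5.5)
We prove the two inclusions separately, writing $g:=g(x)\neq 0$ throughout.

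The inclusion $\mathcal{H}(x)\subseteq\mathcal{C}(x)$ is the content of Proposition~\ref{prop 1}, evaluated at the fixed point $x$. For any $r=(S-K)g$ with $S^\top=-S$ and $K\succeq 0$ we have $g^\top S g=0$ and $g^\top K g\ge 0$, so $g^\top r=-g^\top K g\le 0$. This inclusion holds even when $g=0$.

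The substantive direction is $\mathcal{C}(x)\subseteq\mathcal{H}(x)$. Fix $r$ with $g^\top r\le 0$. The plan is to split $r$ into its component along $g$ and its component orthogonal to $g$, handling the first with $K$ and the second with $S$. Set
\[
\alpha:=-\frac{g^\top r}{\|g\|^2}\ \ge 0,\qquad r_\perp:=r+\alpha g ,
\]
so that $g^\top r_\perp=0$ and $r=r_\perp-\alpha g$.

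For the dissipative part, choose
\[
K:=\frac{\alpha}{\|g\|^2}\,g g^\top .
\]
This is a nonnegative multiple of a rank-one Gram matrix, so $K\succeq 0$, and $Kg=\alpha g$.

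For the conservative part, choose
\[
S:=\frac{r_\perp g^\top-g\,r_\perp^\top}{\|g\|^2}.
\]
This matrix is skew-symmetric by construction. Using $g^\top r_\perp=0$, we get $Sg=(r_\perp\|g\|^2-g\cdot 0)/\|g\|^2=r_\perp$.

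Combining the two, $(S-K)g=r_\perp-\alpha g=r$, so $r\in\mathcal{H}(x)$. Note that the case $g^\top r=0$ is included: then $\alpha=0$ and $K=0$.

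The main obstacle is finding a skew-symmetric matrix that maps $g$ to an arbitrary vector orthogonal to $g$. The explicit rank-two choice above resolves it. The hypothesis $g\neq 0$ is used exactly here and in defining $\alpha$, since both require dividing by $\|g\|^2$. One could also remark that when $g=0$ the set $\mathcal{H}(x)=\{0\}$ while $\mathcal{C}(x)=\mathbb{R}^{n_{\mathrm{lat}}}$, which shows the hypothesis is needed.
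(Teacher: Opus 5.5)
Your proof is correct and matches the paper's argument essentially verbatim. It uses the same orthogonal split of $r$ along $g$, the same rank-one $K=\alpha\,gg^\top/\|g\|^2$ (your $\alpha$ is the paper's $\gamma=-\beta$), and the same rank-two skew matrix $S=(r_\perp g^\top-g\,r_\perp^\top)/\|g\|^2$, while your extra remarks that $\mathcal{H}(x)\subseteq\mathcal{C}(x)$ holds even when $g=0$ and that the hypothesis $g\neq 0$ cannot be dropped are also correct.
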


\begin{pf}(Two inclusions)
($\subseteq$) For any $S^\top=-S$, $K\succeq 0$,
\[
g^\top (S-K)g = g^\top S g - g^\top K g \le 0,
\]
hence $(S-K)g\in\mathcal{C}(x)$.

($\supseteq$) Given any $r\in\mathcal{C}(x)$ with $g\neq 0$, set
\[
\beta := \frac{g^\top r}{\|g\|^2}\ (\le 0),\quad
r_\parallel := \beta g,\quad r_\perp := r-r_\parallel  \ (\text{so } g^\top r_\perp=0).
\]
Let $\gamma:=-\beta\ge 0$ and define
\[
K := \gamma\,\frac{g\,g^\top}{\|g\|^2}\ \ (\succeq 0),\qquad
S := \frac{r_\perp\,g^\top - g\,r_\perp^\top}{\|g\|^2}\ \ (S^\top=-S).
\]
Then $K g=\gamma g=-\beta g$ and $S g=r_\perp$, hence $(S-K)g=r_\perp+\beta g=r$. 
Thus $\mathcal{C}(x)\subseteq\mathcal{H}(x)$.
\end{pf}

With Propositions~\ref{prop 1} and \ref{prop 2}, the latent residual parameterization in \cref{param} is both \emph{energy-safe} and \emph{pointwise expressively complete}. Furthermore, the dissipative latent residual \emph{preserves input-to-state stability (ISS)}: as established in Propositions~\ref{prop 3}, if the nominal physical model is ISS, augmenting it with a \emph{dissipative} latent residual network leaves the overall system \emph{ISS}. This aligns with intuition, since the added network can only \emph{dissipate}—rather than inject—energy.

\begin{prop}[ISS Preservation]\label{prop 3}
Assume the nominal \\ physical model $\dot x \;=\; f_{\mathrm{phys}}(x,u\mid\theta)$ admits an ISS--Lyapunov function $V:\mathbb{R}^{n_x}\!\to\mathbb{R}_{\ge0}$ with $V\in C^1$ s.t.
\begin{subequations}\label{eq:iss-phys}
\begin{align}
\alpha_1(\|x\|)\le V(x)\le \alpha_2(\|x\|),
\label{eq:iss-phys-a}
\\
\nabla_x V(x)^\top f_{\mathrm{phys}}(x,u\mid\theta)\le -\alpha_3(\|x\|)+\sigma(\|u\|),
\label{eq:iss-phys-b}
\end{align}
\end{subequations}
for some class-$\mathcal K_\infty$ functions $\alpha_i$ $(i=1,2,3)$ and $\sigma$.
Consider the augmented system \cref{eq:split} with latent residual parameterization \cref{param}.
Then the augmented system is ISS with respect to $u$ under the same Lyapunov function $V$.
\end{prop}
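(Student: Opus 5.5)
The plan is to reuse $V$ verbatim as an ISS--Lyapunov function for the augmented system and verify the two defining inequalities. The sandwich bound \eqref{eq:iss-phys-a} depends only on $V$, not on the vector field, so it carries over unchanged. All the work lies in the decrease condition.

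Let $f = f_{\mathrm{phys}} + [0;\, r_\phi]$ be the augmented vector field from \cref{eq:split}. By linearity of the directional derivative, the derivative of $V$ along $f$ splits as
\begin{equation*}
\nabla_x V(x)^\top f(x,u\mid\theta,\phi) = \nabla_x V(x)^\top f_{\mathrm{phys}}(x,u\mid\theta) + \nabla_{x^{\mathrm{lat}}} V(x)^\top r_\phi(x,u).
\end{equation*}
The first term is bounded by $-\alpha_3(\|x\|)+\sigma(\|u\|)$ via \eqref{eq:iss-phys-b}. The second term is nonpositive for every $(x,u)$ by Proposition~\ref{prop 1}, since $r_\phi$ has the form \cref{param} with the same $V$.

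It follows that
\begin{equation*}
\nabla_x V(x)^\top f(x,u\mid\theta,\phi) \le -\alpha_3(\|x\|)+\sigma(\|u\|).
\end{equation*}
This is exactly the ISS dissipation inequality with the same comparison functions $\alpha_1,\alpha_2,\alpha_3,\sigma$. Invoking the standard Lyapunov characterization of ISS (Sontag--Wang), the augmented system is ISS with respect to $u$.

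The only delicate point is the well-posedness hypothesis that the standard ISS theorem implicitly requires. The augmented vector field must be, for example, locally Lipschitz in $x$ and continuous in $u$, so that solutions exist and are unique. This holds whenever $S_\phi$, $K_\phi$ are built from smooth activations and $\nabla V$ is locally Lipschitz. If $V$ is only $C^1$, I would add this mild regularity as a standing assumption, or argue existence via continuity and note that the Lyapunov estimate applies to any solution. Forward completeness then follows from the ISS estimate itself, since $V$ is radially unbounded by $\alpha_1\in\mathcal K_\infty$. Beyond this technicality the argument is a one-line consequence of Proposition~\ref{prop 1}.
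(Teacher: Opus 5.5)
Your proposal is correct and matches the paper's proof. Like the paper, you split $\dot V$ into the nominal term, bounded by \eqref{eq:iss-phys-b}, and the latent residual term, which is nonpositive by Proposition~\ref{prop 1}, and so recover the ISS-Lyapunov inequality with the same $V$. Your extra remarks are refinements the paper leaves implicit, not a different route: the sandwich bound carries over unchanged, and well-posedness (local Lipschitzness of the augmented field) is needed to invoke the standard Sontag--Wang characterization.
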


\begin{pf}
Along trajectories of system \cref{eq:split},
\[
\dot V(x,u)\ =\ \nabla_x V(x)^\top f_{\mathrm{phys}}(x,u\mid\theta)\ +\ \nabla_{x^{\mathrm{lat}}}V(x)^\top r_\phi(x,u).
\]
By Proposition~\ref{prop 1}, the residual satisfies \\ $\nabla_{x^{\mathrm{lat}}}V(x)^\top r_\phi(x,u)\le 0$ for all $(x,u)$.
Combining with \cref{eq:iss-phys-b} yields
\[
\dot V \le -\alpha_3(\|x\|)+\sigma(\|u\|),
\]
which is the ISS Lyapunov inequality; hence the augmented system \cref{eq:split} is ISS.
\end{pf}

\begin{remark}\textit{(Motivation for a Latent-Only, Dissipative Residual).}
A latent-only residual prevents the network from overwriting $f_{\mathrm{phys}}$ with a purely data-fitting surrogate on the measured outputs. 
Moreover, in electromechanical systems, dominant modeling errors typically arise from dissipative, often latent effects (e.g., complex friction/damping, hysteresis, fluid–structure interactions) that rarely manifest directly in the measurable states (e.g., end-effector position, body pose)~\citep{sciberras2023thermo,mahmoudkhani2024new,jayawardhana2011dissipativity}. 
Therefore, we design $r_\phi$ to be \emph{latent-only and dissipative}, capturing unmodeled latent dynamics without injecting energy. 
This restriction, however, is not absolute—when the measurable-state dynamics themselves are imperfectly modeled, the residual can be extended accordingly.
\end{remark}

\begin{remark}[\textit{Design of Energy Function $V$}]
In this work, \( V(x) \) denotes a \emph{generalized energy function}, which may represent either a physical energy derived from first principles or a task-specific Lyapunov energy measuring deviation from a desired latent equilibrium. 
For electromechanical systems (our primary focus), the state variables often possess clear physical meanings (e.g., positions, velocities, and angles), and the corresponding energy expressions can often be derived directly from first principles, as demonstrated in our case studies (Sec.~\ref{case study}).
\end{remark}

\begin{remark}[\textit{Connection to Port-Hamiltonian Systems}]
\mbox{}\\
The port-Hamiltonian system (PHS) is typically expressed as 
\[
    \dot{x} = \big[J(x) - R(x)\big]\nabla_x H(x) + G(x)u(t),
\]
where \(J=-J^\top\) and \(R\succeq0\). 
The PHS formulation originates from the principle of \emph{energy flow and power conservation}, describing how energy is exchanged among subsystems through physical ports~\citep{duindam2009modeling}.
In contrast, our proposed dissipative residual structure in \cref{param} is obtained by explicitly \emph{enforcing non-increasing energy}.
Despite the distinct motivations, both formulations share a structurally analogous form.
Therefore, Proposition~\ref{prop 2} can also be interpreted as a theoretical complement to the PHS framework, 
showing that the PHS-type structure \([J(x)-R(x)]\nabla_x H(x)\) can represent all vector fields that ensure non-increasing internal energy when the input is zero.
\end{remark}

\subsection{Learning Strategy}\label{sec:rkr-method}
Since the nonlinear differential equation $f(\cdot)$ in \cref{opt} generally does not admit a closed-form integral, the fourth-order Runge–Kutta (RK4) method is adopted for numerical integration.
The structure of the RK4 module is shown in Fig.~\ref{RK4}. Its inputs are the current system state $\hat{x}_i$ and the control input $u_i$. The sampling interval $h$ is a fixed step size of the integrator. The output is the predicted next-step state, $\hat{x}_{i+1}$.

\begin{figure}[h]
      \centering
      \includegraphics[width=\linewidth]{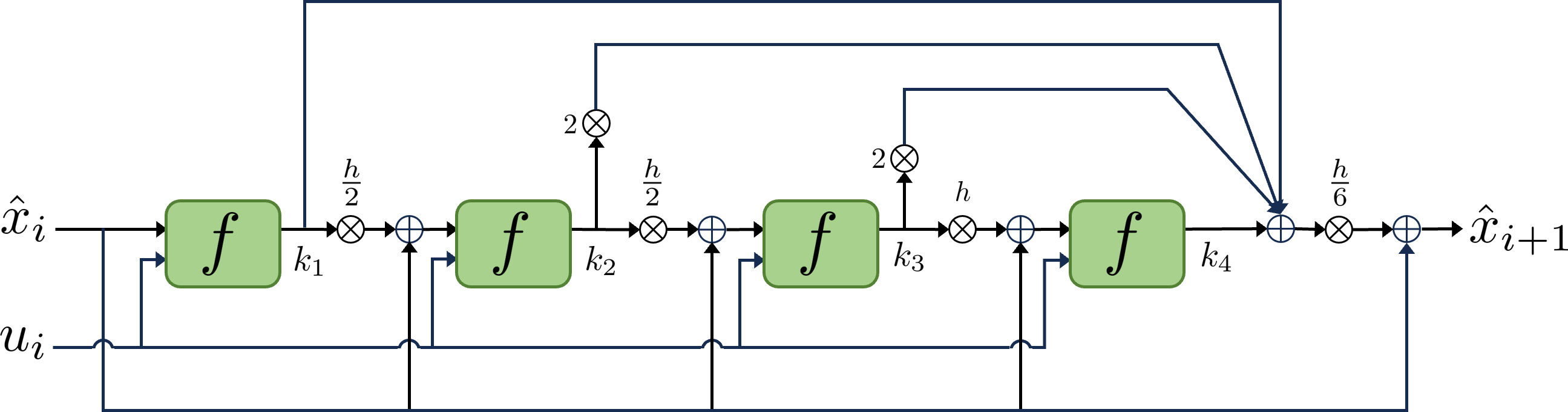}
      \caption{Schematic of the fourth-order Runge-Kutta (RK4) method.}
      \label{RK4}
\end{figure}

The learning strategy adopts a recurrent structure as its backbone, not only because it conforms to the dynamic characteristics of state-space systems, but more importantly because the system states contain unmeasurable latent variables, and the recurrent structure helps capture these latent state variables. 
The recurrent rollout used in the learning process is illustrated in Fig.~\ref{RKR-PINN}. Based on estimates of the initial latent state $\hat{x}^{lat}_0$, \(\theta\) and \(\phi\), it recurrently predicts the state time series using input data $u_i$ and compares the measurable portion with the actual measurement data. 
\begin{figure}[h]
      \centering
      \includegraphics[width=\linewidth]{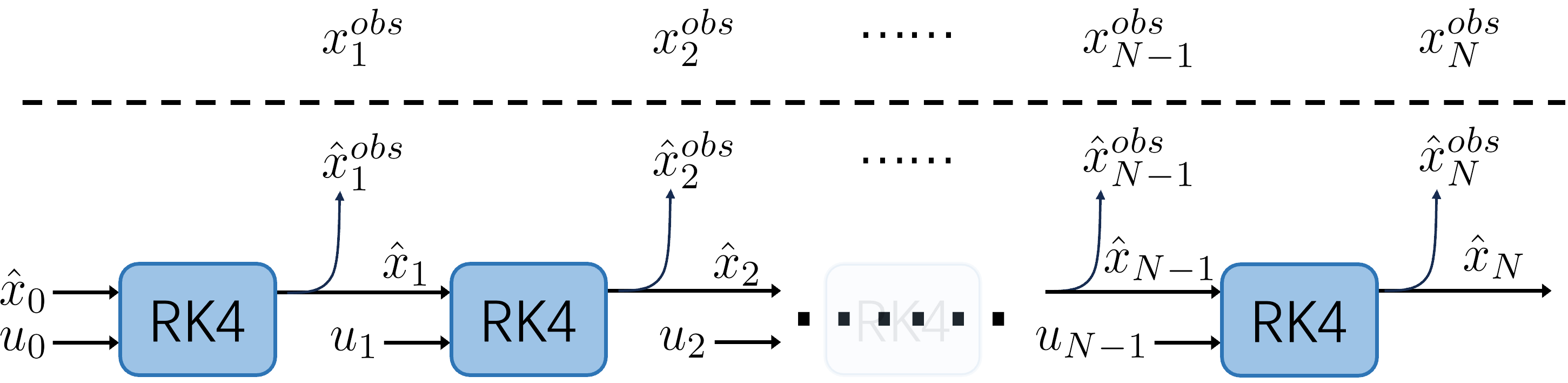}
      \caption{The recurrent RK4 rollout structure for model training.  Please note that $\hat{x}_i$ is composed of $\hat{x}_i^{obs}$ and $\hat{x}_i^{lat}$.}
      \label{RKR-PINN}
\end{figure}

To compare the differences between the predicted outputs and the measured data, a loss function $\ell$ should be defined. A commonly used loss function is the mean squared error (MSE); however,  directly using it may not lead to good parameter estimation results. This is because, due to the choice of units, the magnitudes of the responses for each state can vary significantly. 
Consequently, when using MSE, states with smaller responses may be overlooked since their loss contributions are comparatively minor \citep{yang2023physics}. To cope with this issue, we propose using variance, given its effectiveness in measuring data dispersion, to define the corrected MSE (CMSE):
\begin{equation}
    \label{cmse}
    \begin{aligned}
    \text{CMSE}(\hat{x}_i^{obs}, x_i^{obs}):=\sum_{j=1}^{m}w_j( \hat{x}_{i,j}^{obs}-x_{i,j}^{obs})^2, \\ 
    \text{where} \quad w_j = \frac{\sum_{i=1}^{m} 
\mathrm{Var}\!\left(\{x^{\mathrm{obs}}_{i,j}\}_{i=0}^{N}\right)}
{\mathrm{Var}\!\left(\{x^{\mathrm{obs}}_{i,j}\}_{i=0}^{N}\right)},
    \end{aligned}   
\end{equation}
here $m$ represents the dimension of vector $x^{obs}_{i}$ and $x^{obs}_{i,j}$ represents the \(j\)-th component of $x^{obs}_{i}$. 

Finally, to enhance training efficacy, we propose a curriculum learning strategy with progressive sequence length extension. When processing excessively long time series, the increased recurrence iterations amplify input-output nonlinearity in the neural architecture, leading to highly non-convex loss landscapes that frequently trap optimization in suboptimal local minima. Conversely, insufficient training sequences risk overfitting due to limited pattern diversity. To mitigate these issues, our strategy initiates training with truncated subsequences and gradually expands the temporal horizon to encompass full-length sequences as model competence improves. This phased approach allows the network to first master local temporal dependencies through `easier' truncated learning objectives before tackling global sequence modeling challenges.
The learning strategy is summarized in Algorithm \ref{alg}.
\begin{algorithm}
    \caption{Learning Strategy for DiLaR-PINN}
    \textbf{Input:} Dataset $\{(x^{obs}_i,u_i)\}_{i=0}^{N}$, initial sequence length $l$, sequence length increment $\triangle l$, sampling interval $h$, loss threshold $loss\_thr$
    \begin{algorithmic}[1] 
        \State Initialize vectors $\hat{x}_0^{lat}, \theta,$ and $\phi$
        \For{$t = 1, \ldots, T$}
            \State $\hat{x}_0 \leftarrow (x^{obs}_0,\hat{x}^{lat}_0)$
            \For{$i = 0, \ldots, l$}
                \State $\hat{x}_{i+1}=RK4(\hat{x}_i,u_i \mid \theta, \phi, h)$
                \State $(\hat{x}^{obs}_{i+1},\hat{x}^{lat}_{i+1})\leftarrow\hat{x}_{i+1}$
            \EndFor
            \State $loss=\sum_{i=1}^{l}CMSE(\hat{x}_i^{obs}, x_i^{obs})$
            \State Update $\hat{x}_0^{lat}, \theta,$ and $\phi$ based on $loss$
            \If{$loss < loss\_thr$ and $l < N$} 
                \State $l \leftarrow l + \triangle l$
            \EndIf
        \EndFor
    \end{algorithmic}
    \label{alg}
\end{algorithm}

\section{Case Study: Helicopter System}
\label{case study}
To validate our method, we consider a representative electromechanical system—the helicopter setup—as the identification benchmark. Several existing baseline methods are included for comparison. The objective of this case study is twofold: (1) to demonstrate that our approach can successfully learn the unmodeled dissipative dynamics, and (2) to evaluate whether it provides performance advantages over the existing baselines.

\subsection{Comparison Baselines}
To validate the major design decisions and their impacts on performance, we compare with \textbf{four} different alternative approaches:

\begin{itemize}

\item \textit{Nominal physical model (NPM).}  
The coarse physics-only model $\dot x = f_{\mathrm{phys}}(x,u\mid\theta)$ serves as the first baseline, 
against which the performance improvement brought by the proposed DiLaR-PINN can be evaluated.

\item \textit{Universal Differential Equation (UDE).}  
Following the UDE framework~\citep{rackauckas2020universal},
a standard unstructured fully connected feedforward neural network (FFNN) is added as an unconstrained residual
to the \emph{entire} physical model, e.g.  
$\dot{x} = f_{\mathrm{phys}}(x,u) + f_{\mathrm{NN}}(x,u)$.

\item \textit{DiLaR-Soft.}  
This baseline borrows the idea of the pointwise Lyapunov loss proposed by~\cite{rodriguez2022lyanet},
which enforces exponential convergence to labeled equilibria through a potential function~$V$.  
Here, the same principle is applied to the latent dissipativity condition by introducing
a soft penalty term.  
The overall training loss is formulated as
\[
\begin{aligned}
\mathcal{L} 
= & \sum_{i=1}^{N} \ell(\hat{x}_i^{\mathrm{obs}}, x_i^{\mathrm{obs}}) \\
  & + \lambda_{\mathrm{diss}}
      \sum_{i=1}^{N} 
      \operatorname{ReLU}\!\big(
      \nabla_{x^{\mathrm{lat}}} V(x_i)^\top r_\phi(x_i,u_i)
      \big),
\end{aligned}
\]
where the second term penalizes any violation of the dissipativity condition,
encouraging the network to approximately maintain energy dissipation during training.

\item \textit{LSTM.}  
A purely black-box baseline.

\end{itemize}

\subsection{System Description}
The helicopter setup is shown in Fig.~\ref{heli setup}, which consists of a beam attached to a fixed pole with DC motors equipped with propellers mounted at both ends. The system’s input is the left motor's voltage \( u \) (V), and its outputs are the measurements from two sensors: the propeller rotational speed \( \omega \) (1000 rad/s) and the helicopter pitch angle \( \alpha \) (rad). The right motor is not controlled.

\begin{figure}[h]
  \centering
  \begin{subfigure}[b]{0.37\linewidth}
    \centering
    \includegraphics[width=\linewidth]{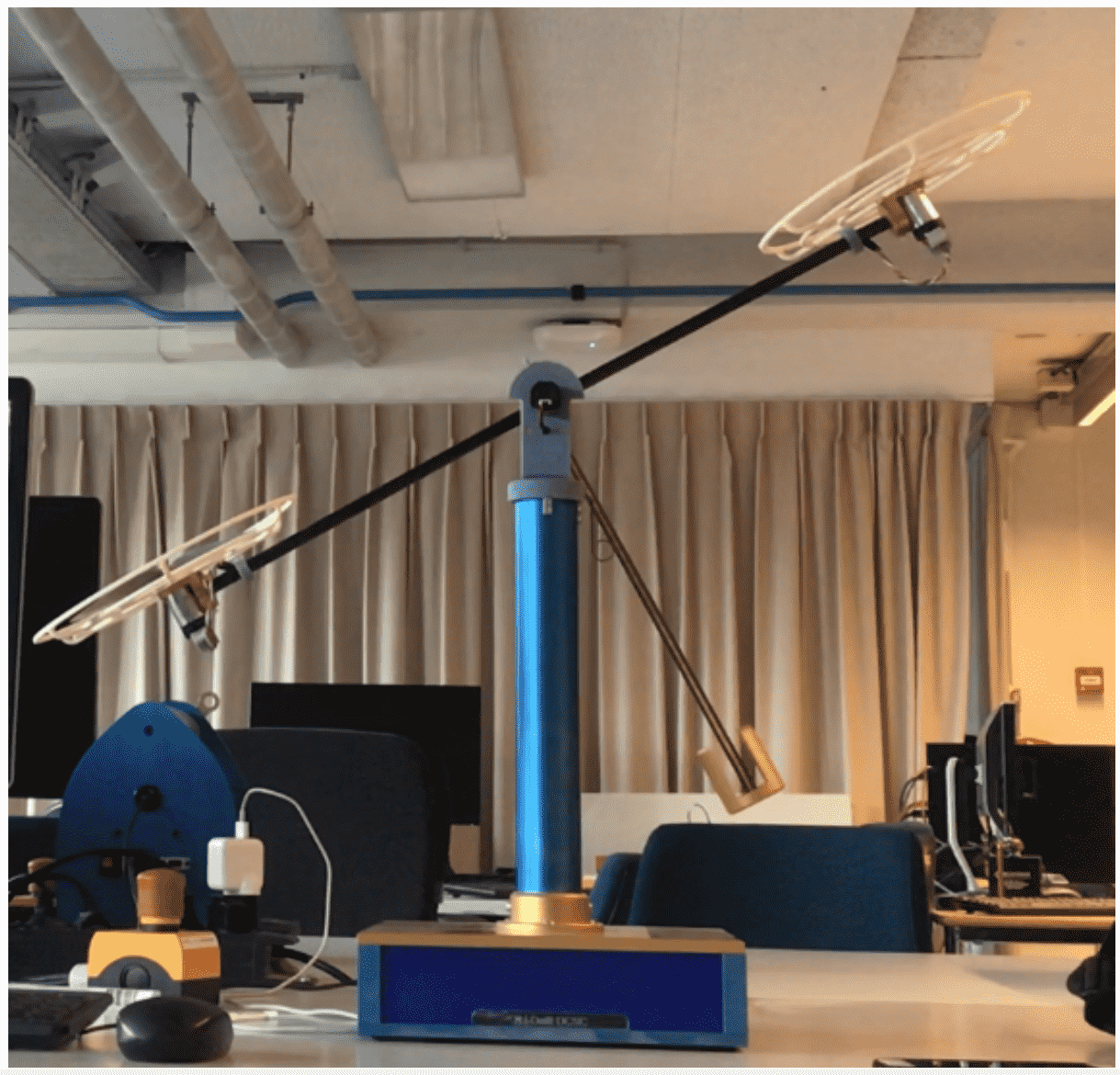}
    \caption{}
    \label{fig:heli-sub1}
  \end{subfigure}
  \hfill
  \begin{subfigure}[b]{0.45\linewidth}
    \centering
    \includegraphics[width=\linewidth]{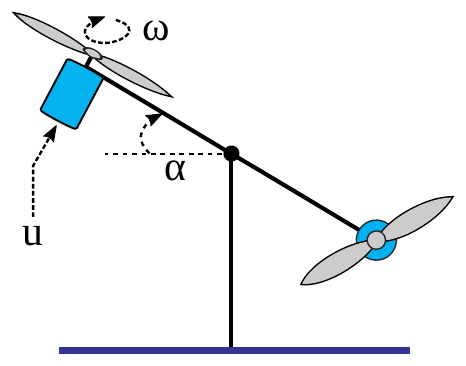}
    \caption{}
    \label{fig:heli-sub2}
  \end{subfigure}
  \caption{(a) The physical helicopter setup. (b) Schematic drawing of the helicopter setup.}
  \label{heli setup}
\end{figure}

The physical model of the helicopter, shown in \cref{heli model}, is a simplified model derived from first principles. Dissipative joint effects—such as friction and damping arising from arm rotations—are not included in the model. The system state is defined as $x = \{\omega, \alpha, \dot{\alpha}\}$,
where the angular velocity $\dot{\alpha}$ is unmeasurable and therefore treated as a latent variable.
The model contains five unknown physical parameters, 
$\theta = \{k_1,..., k_5\}$, and is expressed as
\begin{equation}
\begin{aligned}
\begin{bmatrix}
    \dot{\omega}\\
    \dot{\alpha}\\
    \ddot{\alpha}\\
    \end{bmatrix}=f_{phys}\left(\begin{bmatrix}
    \omega\\
    \alpha\\
    \dot{\alpha}\\
    \end{bmatrix},u \ \Bigg| \ \theta \right)&=
    \begin{bmatrix}
    k_2\omega+k_{3}\omega^2+k_1u\\
    \dot{\alpha}\\
    k_{4}\omega^2-k_5sin(\alpha)\\
    \end{bmatrix}.\\
    \label{heli model}\\[-0.8em] 
\end{aligned}
\end{equation}

\begin{figure*}[t]
    \centering
    \includegraphics[width=1.0\textwidth]{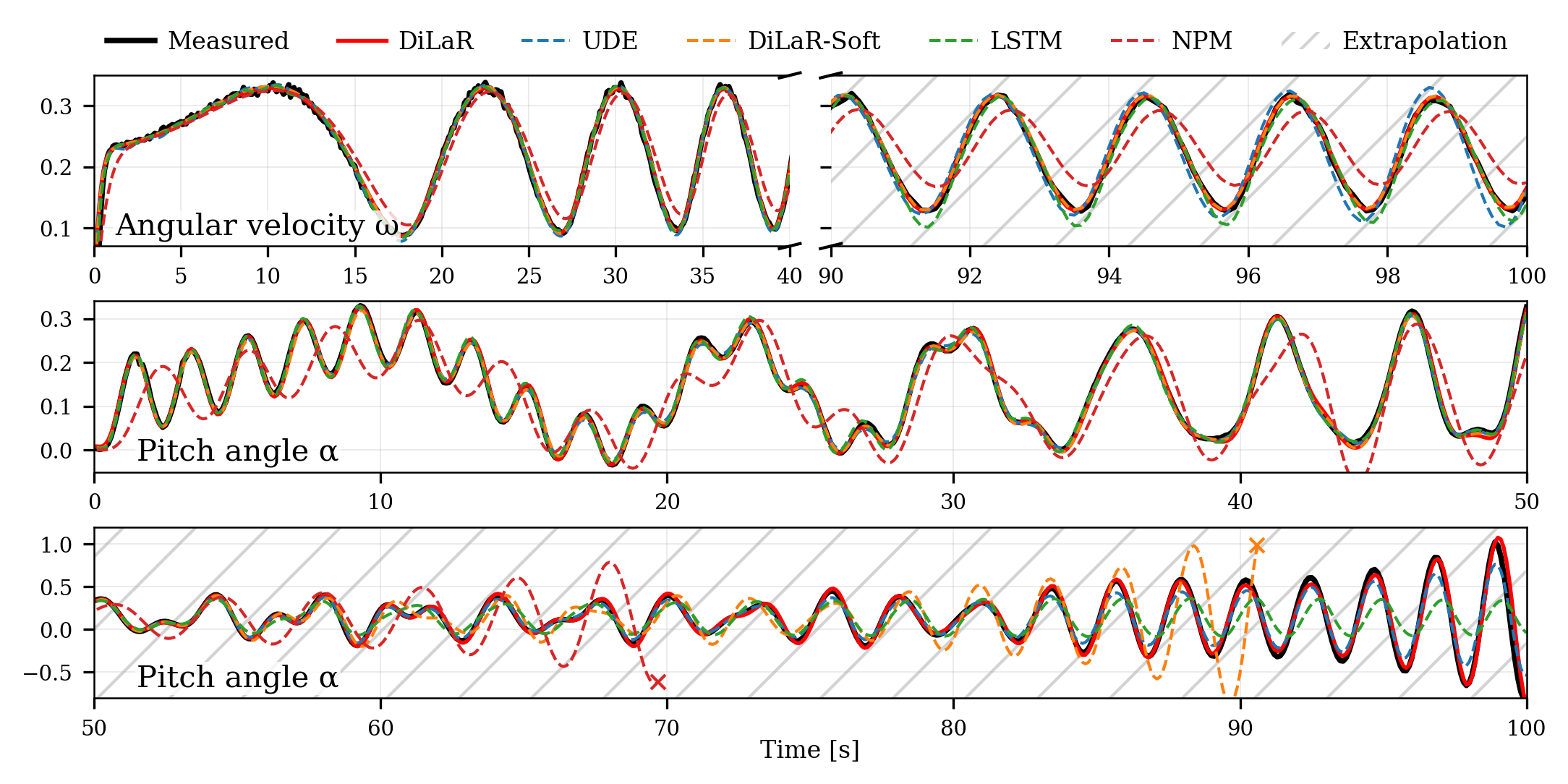}
    \vspace{-0.6cm}
    \caption{All models predict the system’s 100\,s output solely based on the given
    input $u$, their learned system dynamics, and the initial latent-state values. In Subfigure~3, the Nominal Physical Model (NPM) and DiLaR-Soft exhibit significant distortion at approximately $69\,\text{s}$ and $91\,\text{s}$, respectively, and their curves are therefore truncated. Overall, our model demonstrates the best performance, maintaining stable and close agreement with the measured data throughout the entire prediction horizon.}
    \label{heli fig}
\end{figure*}

According to fundamental physical principles, the total system energy can be expressed as the sum of
three decoupled components:
\[
V(x) = R_m(\omega) + P_h(\alpha) + R_h(\dot{\alpha}) \quad \text{with} \quad R_h = \tfrac{1}{2}I_h \dot{\alpha}^2,
\]
where each state contributes independently to the total energy. Specifically, $R_m$ denotes the motor’s rotational kinetic energy,
$P_h$ the helicopter’s gravitational potential energy,
and $R_h$ the helicopter’s rotational kinetic energy, with $I_h$ being the rotational inertia of the helicopter arm.
The derivative of the total energy with respect to the angular velocity $\dot{\alpha}$ is then
\[
\nabla_{\dot{\alpha}} V(x)
= d R_h(\dot{\alpha}) \big/ d \dot{\alpha}
= I_h \dot{\alpha},
\]
and, without loss of generality, $I_h$ is normalized to~1.

\subsection{Dataset and Training}

In the experiment, the input voltage $u$ is set as a chirp signal ranging from 0 to 0.4\,Hz for a duration of 100\,s, with a sampling interval of 0.1\,s (1000 samples). Only the first 500 samples are used for training, meaning that the model is exposed only to the system's low-frequency response but it must generalize to higher-frequency dynamics. During testing, the same 1000-point chirp signal $u$ is applied, and the model evolves solely according to its learned system dynamics and initial latent-state values. The first 500 predictions are referred to as the \textit{training region}, while the remaining 500 constitute the \textit{extrapolation region}.

Since the dimension of the latent state is $1$, the skew-symmetric matrix $S$ reduces to the constant zero, and the PSD matrix $K$ degenerates to a positive scalar. In DiLaR-PINN, the residual network models $K$ using a feedforward neural network with two hidden layers of 12 neurons each. The input and output dimensions are $4$ and $1$, respectively. All layers except the output layer use the $\tanh$ activation function, while the last layer uses \texttt{softplus} to ensure a positive output. During training, we first identify the parameters $\theta$ of the NPM using the \texttt{nlgreyest} function in MATLAB, and then use this identified $\theta$ as the initial value of the physical model part for UDE, DiLaR-SOFT, and DiLaR training.


\subsection{Experimental Results}
The experimental results shown in Fig.~\ref{heli fig} demonstrate that our method achieves the best prediction performance. 
Because the nominal physical model is an overly simplified representation of the system, NPM is almost incapable of accurately predicting the true outputs. 
As a black-box model, LSTM fits the training data well but fails to extrapolate to unseen high-frequency inputs due to the absence of any incorporated physical prior. 
The soft-constrained DiLaR-Soft model exhibits excellent fitting performance during training but gradually deviates from the true system behavior in the $\alpha$ state during the test phase, eventually leading to noticeable distortion. 
The UDE method---although the second-best performer overall---tends to learn an excessively dissipative dynamics. As shown in the third subfigure, from approximately $t = 90\,\text{s}$ onward, its predicted helicopter pitch angle becomes noticeably smaller than the measured values. 
In contrast, our method achieves accurate fitting within the training region and, more importantly, continues to predict the system outputs correctly when exposed to higher-frequency input signals. This demonstrates that the latent dissipative residual network successfully captures the unmodeled dissipative dynamics.

\begin{table}[t]
\centering
\caption{RMSE comparison among different baseline models.}
\renewcommand{\arraystretch}{1.15}
\setlength{\tabcolsep}{6pt}
\begin{tabular}{llccccc}
\toprule
 &  & 
 \raisebox{1ex}{\textbf{NPM}} &
 \raisebox{1ex}{\textbf{LSTM}} &
 \raisebox{1ex}{\textbf{UDE}} &
 \shortstack{\textbf{DiLaR}\\\textbf{(Soft)}} &
 \shortstack{\textbf{DiLaR}\\\textbf{(Ours)}} \\
\midrule
\multirow{3}{*}{\rotatebox{90}{\textbf{Train}}} 
 & $\omega$ & 0.0249 & \textbf{0.0048} & 0.0072 & 0.0066 & 0.0068 \\
 & $\alpha$ & 0.0635 & 0.0071 & 0.0063 & \textbf{0.0059} & \textbf{0.0053} \\
 & Overall & 0.0483 & \textbf{0.0060} & \textbf{0.0068} & \textbf{0.0062} & \textbf{0.0061} \\
\midrule
\multirow{3}{*}{\rotatebox{90}{\textbf{Test}}} 
 & $\omega$ & 0.0368 & 0.0108 & 0.0103 & \textbf{0.0046} & \textbf{0.0046} \\
 & $\alpha$ & 0.6682 & 0.1929 & 0.0704 & 0.4726 & \textbf{0.0504} \\
 & Overall & 0.4732 & 0.1367 & 0.0503 & 0.3342 & \textbf{0.0358} \\
\bottomrule
\label{heli table}
\end{tabular}
\end{table}

To quantitatively assess model performance, we computed the RMSE metrics, as presented in Table~\ref{heli table}. It can be seen that, except for the NPM, all methods achieve good fitting accuracy on the training data. However, on the test data, our method outperforms all baseline models. 
This improvement arises from the fact that dissipation is enforced through hard physical constraints, which leads to substantially more reliable generalization.

\section{Conclusion}
For electromechanical systems, dissipative effects such as joint friction and structural damping are among the most challenging components to model accurately. To address these modeling difficulties, we propose DiLaR-PINN, which combines a coarse physical model with a structured residual network. In particular, its residual component is constructed using a skew–dissipative parameterization that guarantees the residual network never injects energy into the system, thereby ensuring physically consistent modeling of dissipative dynamics.
Using a real-world helicopter setup, we demonstrate that DiLaR-PINN successfully learns the unmodeled joint-level dissipative effects, which are not trivial to capture, and outperforms all baseline approaches.
For future work, DiLaR-PINN may be applied to more complex systems—such as robotic manipulators and continuum robotic structures—to further explore its potential. 
In addition, the identified models could be exploited for controller learning and design, for example within reinforcement learning frameworks and sampling-based model predictive control schemes.



\bibliography{ifacconf}             
                                                   







\end{document}